\theoremstyle{thmstyleone}%
\newtheorem{theorem}{Theorem}
\theoremstyle{thmstyletwo}%
\theoremstyle{thmstylethree}%
\begin{document}

\title{Ensemble-Enhanced Graph Autoencoder with GAT and Transformer-Based Encoders for Robust Fault Diagnosis}


\author*[1]{\fnm{Moirangthem Tiken} \sur{Singh}}\email{tiken.m@dibru.ac.in}

\affil*[1]{\orgdiv{Department of Computer Science and Engineering}, \orgname{DUIET, Dibrugarh University}, \orgaddress{\street{} \city{Dibrugarh}, \postcode{786004}, \state{Assam}, \country{India}}}


\abstract{
		Fault classification in industrial machinery is vital for enhancing reliability and reducing downtime, yet it remains challenging due to the variability of vibration patterns across diverse operating conditions. This study introduces a novel graph-based framework for fault classification, converting time-series vibration data from machinery operating at varying horsepower levels into a graph representation. We utilize Shannon's entropy to determine the optimal window size for data segmentation, ensuring each segment captures significant temporal patterns, and employ Dynamic Time Warping (DTW) to define graph edges based on segment similarity. A Graph Auto Encoder (GAE) with a deep graph transformer encoder, decoder, and ensemble classifier is developed to learn latent graph representations and classify faults across various categories. The GAE's performance is evaluated on the Case Western Reserve University (CWRU) dataset, with cross-dataset generalization assessed on the HUST dataset. Results show that GAE achieves a mean F1-score of 0.99 on the CWRU dataset, significantly outperforming baseline models---CNN, LSTM, RNN, GRU, and Bi-LSTM (F1-scores: 0.94--0.97, \(p < 0.05\), Wilcoxon signed-rank test for Bi-LSTM: \(p < 0.05\))---particularly in challenging classes (e.g., Class 8: 0.99 vs. 0.71 for Bi-LSTM). Visualization of dataset characteristics reveals that datasets with amplified vibration patterns and diverse fault dynamics enhance generalization. This framework provides a robust solution for fault diagnosis under varying conditions, offering insights into dataset impacts on model performance.
	
}

\keywords{Fault classification, Bearing fault diagnosis, Graph Auto Encoder, Deep graph transformer, Shannon's entropy, Dynamic Time Warping, Cross-dataset generalization}



\maketitle

\section{Introduction}\label{sec1}

Fault diagnosis in industrial machinery plays a vital role in maintaining reliability, reducing downtime, and preventing costly failures. Vibration signals effectively capture a machine's dynamic behavior and are widely used to detect and classify faults such as bearing defects, misalignment, and imbalance. However, these signals grow more complex under varying operating conditions, such as changes in horsepower. This variability creates major challenges for models that must generalize across different datasets.

This study tackles fault classification using vibration data from machines operating at multiple horsepower levels. Each dataset contains labeled time series representing different fault types. The main challenge lies in the variation of vibration patterns and fault dynamics between datasets. Traditional models often fail to generalize in such scenarios.

We address this problem with a graph-based framework that captures both local and global temporal patterns. We first transform time series into graph structures. Each segment becomes a node, and we define edges using Dynamic Time Warping (DTW) \cite{sakoe1978dynamic} based on similarity. We determine segment size using Shannon entropy \cite{shannon1948mathematical} to ensure meaningful temporal representation. A Graph Autoencoder (GAE) with a deep graph transformer encoder, decoder, and ensemble classifier learns a latent representation of the graph and predicts fault classes.

This problem matters because industrial machines operate under diverse speeds and loads. Ineffective generalization may result in misdiagnosis, increased maintenance, and safety risks. A robust model must handle unseen conditions reliably.

Cross-dataset generalization remains a key challenge in fault classification. Higher horsepower can amplify vibrations and reveal broader fault dynamics, while lower levels may show more subtle variations. Traditional machine learning methods like SVMs or shallow neural networks often overfit and fail on unseen data. These models also treat time series as isolated sequences, ignoring important segment dependencies.

Earlier work mostly focused on single-dataset scenarios. Classical signal processing, including Fourier and wavelet transforms, extracted features for classifiers like k-NN or Random Forests \cite{LEI2013108}. These approaches performed well within a dataset but struggled across domains. Deep learning methods, such as CNNs and RNNs, improved within-dataset accuracy by learning temporal features from raw signals \cite{zhang2019deep}. Still, they lacked the ability to model structural relationships, limiting their generalization.

Graph-based techniques offer a promising solution. By representing time series as graphs, Graph Neural Networks (GNNs) can learn both temporal and structural patterns \cite{9046288}. However, most current studies do not explore graph-based autoencoders with ensemble classification for fault diagnosis across datasets.

This paper pursues three key objectives:
\begin{enumerate}
	\item Propose a graph-based framework that transforms time series into graph representations using entropy-based segmentation and DTW-based similarity.
	\item Develop a GAE with a deep graph transformer encoder, decoder, and ensemble classifier to predict fault classes across varying operating conditions.
	\item Evaluate the model's cross-dataset generalization, analyze statistical differences between datasets, and examine their impact on classification performance.
\end{enumerate}

In summary, this study underscores the critical challenge of generalizing fault classification across diverse operating conditions in industrial machinery. Our proposed Graph Auto Encoder (GAE)-based framework tackles this issue by effectively capturing structural and temporal dependencies in vibration data, enabling robust cross-dataset performance. The remainder of this paper is structured as follows: Section 2 reviews the literature, providing a comprehensive survey of recent advancements in fault classification. Section 3 details the methodology, including graph creation, the GAE architecture, and the ensemble classification method. Section 4 presents the data analysis and result analysis, evaluating the framework's performance across datasets and discussing key findings.

\section{Literature Review}

Recent advances in fault classification for industrial machinery using vibration data have explored a range of approaches, including deep learning, graph-based models, autoencoders, and hybrid techniques. While many of these methods report high accuracy on benchmark datasets, they often struggle to generalize across varying operating conditions and datasets.

Deep learning approaches have demonstrated strong capabilities in feature extraction. For instance, Wang et al.~\cite{WANG2024e35407} achieved 95\% accuracy on a custom dataset using a multi-sensor fusion framework based on a deep CNN. Similarly, Che et al.~\cite{doi:10.1177/1687814019897212} reported 92.5\% accuracy under varying speeds and loads using a deep transfer CNN. However, both studies focused on single-domain scenarios, and their models exhibited diminished performance when exposed to previously unseen fault dynamics.

Neupane et al.~\cite{NEUPANE2025129588} highlighted these limitations in their review, noting that deep models are often sensitive to noise and lack adaptability to novel fault types?further underscoring the gap in cross-domain generalization.

Autoencoder-based techniques have proven effective in modeling complex, nonlinear patterns in vibration data. Denoising autoencoders~\cite{Gu2021}, adaptive transfer autoencoders~\cite{Tang2021}, and deep autoencoder fusion frameworks~\cite{Senanayaka2018} have shown robustness under noisy and variable conditions. Further advancements, such as integrating Support Vector Data Description (SVDD) with deep autoencoders~\cite{Zheng2019} and employing semi-supervised deep sparse architectures~\cite{Zhao2021}, have enhanced generalization and noise resilience. Despite these advantages, autoencoder models often demand large datasets and considerable computational resources. Their deployment in industrial settings must also consider practical constraints such as real-time performance and limited data availability.

Graph-based approaches have made significant strides in capturing structural and temporal dependencies within vibration signals. Zhu et al.~\cite{Zhu2023} combined sparse wavelet decomposition with GraphSAGE, achieving 99.73\% accuracy in bearing fault classification. Wang et al.~\cite{10044173} applied Graph Convolutional Networks (GCNs) to graph-transformed vibration signals, improving fault detection under diverse load conditions.

More recently, Sun et al.~\cite{Sun2025} introduced a self-attention-based model that constructs spatiotemporal nested graphs, enabling reliable fault diagnosis with limited training data. Luczak~\cite{Luczak2024} enhanced classification robustness by integrating synchrosqueezed transforms with CNNs, effectively capturing both spectral and temporal information. These works highlight the strength of graph-based and hybrid models in addressing real-world diagnostic challenges.

Hybrid deep learning architectures combine complementary modeling capabilities to enhance fault detection performance, particularly on benchmark datasets like the Case Western Reserve University (CWRU) dataset. Prawin~\cite{Prawin2024} proposed a 2DCNN-LSTM model that captures spatial and temporal features, demonstrating improved performance under class imbalance. Yin et al.~\cite{Yin2023} developed a Multi-Scale Graph Convolutional Network (MS-GCN) to extract features at multiple resolutions, boosting detection precision. Data augmentation techniques further strengthen these approaches: a 2022 study utilized Wasserstein GAN with Gradient Penalty (WGAN-GP) to generate synthetic samples and reduce overfitting~\cite{9961689}, while Zhang et al.~\cite{Zhang2023} enhanced semi-supervised learning by dynamically weighting losses from labeled and unlabeled data.

Despite their promise, these hybrid methods also face limitations in generalization across machines, datasets, and operating conditions. While many models achieve over 90\% accuracy on curated datasets, performance often degrades in cross-domain settings. Deep learning models tend to overfit, autoencoders may lack structural interpretability, and graph-based methods can oversimplify relational modeling. Hybrid techniques offer a promising direction, yet a unified framework that integrates temporal dynamics with structural dependencies remains an open research challenge. Such a model must not only deliver high accuracy but also ensure robust and consistent performance across diverse industrial environments.

\section{Methodology}

We introduce a method for analyzing abnormalities in time-series data. First, the method splits the time series into overlapping windows and determines the optimal segmentation size through an entropy analysis. It then builds a graph from the segments using dynamic time warping (DTW). The generated graph is passed to a Graph Autoencoder (GAE) that captures its latent representation.
\begin{figure}[h!]
	\centering
	\includegraphics[width=\linewidth]{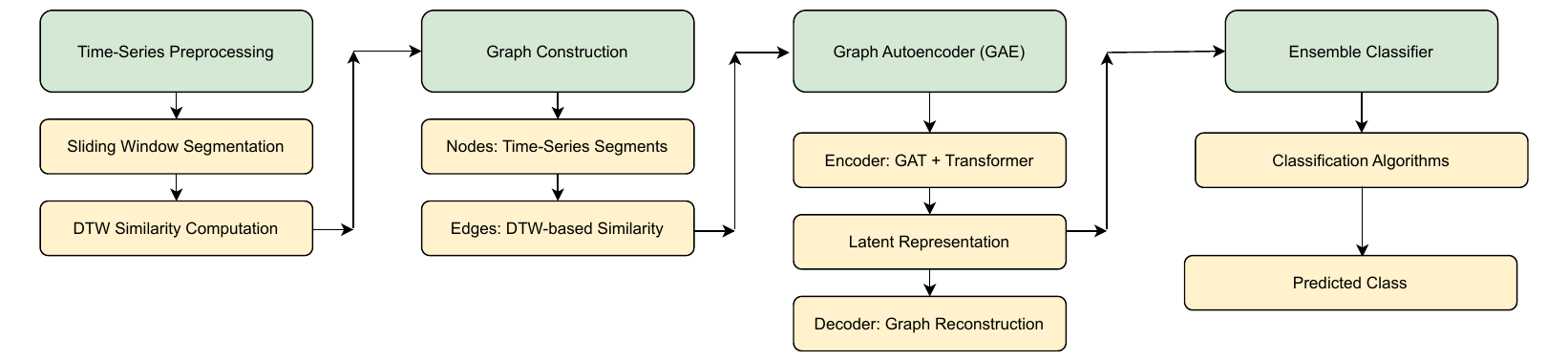}
	\caption{Proposed Model: Pipeline Architecture Overview.}
	\label{fig:ModelArchitecture}
\end{figure}
In a downstream task, an ensemble method uses these latent representations to detect normality. Ensemble \cite{dietterich2000ensemble} combines predictions from multiple classifiers to improve accuracy and reliability. This pipeline integrates segmentation, graph construction, feature learning, and predictive modeling to uncover patterns in complex data.
The graph construction section explains how time-series data can be transformed into a graph. The next section details how GAE handles temporal and spatial graph data during encoding and decoding. Finally, an ensemble model classifies the data as normal or abnormal using the latent variables.

\subsection{Graph Construction}

The graph construction process begins with identifying the optimal window size for segmenting the time series data from each dataset (1Hp, 2Hp, 3Hp). This segmentation ensures a balance between capturing sufficient underlying patterns and maintaining computational efficiency. A set of candidate window sizes \( W = \{w_1, w_2, \dots, w_k\} \) is evaluated. For each candidate \( w \), the time series \( T = \{t_1, t_2, \dots, t_N\} \) is divided into segments of length \( w \), and the entropy of each segment \( S = [s_1, s_2, \dots, s_w] \) is computed using Shannon's entropy formula \cite{shannon1948mathematical}, as shown in Equation \ref{equ:entr01}.

\begin{equation}
	H(S) = -\sum_{x \in X} p(x) \log p(x) 
	\label{equ:entr01}
\end{equation}

Here, \( \log \) denotes the natural logarithm, and the entropy \( H(S) \) quantifies the uncertainty or randomness within the segment, with higher values indicating greater diversity in the segment's values. We then calculate the average entropy \( \bar{H}(w) \) across all segments of size \( w \) in the time series:
\[
\bar{H}(w) = \frac{1}{m} \sum_{i=1}^m H(S_i),
\]
where \( m \) is the number of segments for window size \( w \). The optimal window size \( w^* \) is determined by maximizing the normalized entropy, as defined in Equation \ref{equ:ent2}.

\begin{equation}
	w^* = \arg\max_{w \in W} \left( \frac{\bar{H}(w)}{\log w} \right)
	\label{equ:ent2}
\end{equation}

Here, \( \log w \) normalizes the entropy to account for its dependence on window size, ensuring a balance between informativeness and computational efficiency. Once \( w^* \) is determined, the time series \( T \) is segmented into overlapping windows of size \( w^* \), producing a set of segments \( \{S_1, S_2, \dots, S_m\} \), where \( S_i = T[i:i+w^*] \), \( i \) ranges from 0 to \( N - w^* \) with a step size \( s \), and \( m \) is the total number of segments.

We then construct a graph \( G = (V, E) \) to model the relationships between these segments, capturing both local and global temporal patterns. Each node \( v_i \in V \) corresponds to a segment \( S_i \) and is annotated with a label \( L_i \) (indicating the fault class, 0--9) and a feature vector \( F_i \). The feature vector \( F_i \) for node \( v_i \) includes the following features derived from segment \( S_i \):
\begin{itemize}
	\item \textbf{Statistical Features}: Mean, standard deviation, skewness, and kurtosis of the values in \( S_i \), capturing the segment's central tendency, variability, and distribution shape. For example, the mean is \( \frac{1}{w^*} \sum_{t=1}^{w^*} s_t \), and the standard deviation is \( \sqrt{\frac{1}{w^*} \sum_{t=1}^{w^*} (s_t - \text{mean})^2} \).
	\item \textbf{Entropy}: The entropy \( H(S_i) \), calculated using Equation \ref{equ:entr01}, representing the segment's uncertainty or randomness.
	\item \textbf{Temporal Features}: The average first and second differences of the segment, i.e., \( \text{avg}(\Delta s_t) = \frac{1}{w^*-1} \sum_{t=1}^{w^*-1} (s_{t+1} - s_t) \) and \( \text{avg}(\Delta^2 s_t) = \frac{1}{w^*-2} \sum_{t=1}^{w^*-2} (\Delta s_{t+1} - \Delta s_t) \), capturing trends and acceleration in the time series.
	\item \textbf{Frequency Features}: The power spectral density (PSD) of the segment, computed via Fast Fourier Transform (FFT), with the top three frequency components and their amplitudes, reflecting the segment's dominant periodic patterns.
\end{itemize}

Edges \( E \) are defined based on the similarity between segments, computed using the Dynamic Time Warping (DTW) distance \cite{sakoe1978dynamic}. For each pair of segments \( S_i \) and \( S_j \), the DTW distance \( D(S_i, S_j) \) is calculated as:

\[
D(S_i, S_j) = \min_{\pi} \left( \sum_{(k,l) \in \pi} d(s_{ik}, s_{jl}) \right),
\]

where \( \pi \) is a warping path that aligns \( S_i \) and \( S_j \), and \( d(s_{ik}, s_{jl}) \) is the Euclidean distance between \( s_{ik} \) and \( s_{jl} \). The similarity between nodes \( v_i \) and \( v_j \) is then defined as:

\[
\text{Similarity}(v_i, v_j) = \frac{1}{1 + D(S_i, S_j)}.
\]

An edge \( e_{ij} \) is added between \( v_i \) and \( v_j \) if the DTW distance \( D(S_i, S_j) \) is below a predefined threshold \( \theta \), with the edge weight set to the similarity value. The entropy-based selection of \( w^* \) ensures that each segment captures meaningful temporal patterns, while the node features \( F_i \) provide a rich representation of the segment's properties, enabling the graph to effectively capture the structural and temporal patterns of the time series, as proven by Theorem \ref{theorem1}.

\begin{theorem}
	Entropy-based segmentation with DTW-based graph construction minimizes information loss and preserves temporal structure.
	\label{theorem1}
\end{theorem}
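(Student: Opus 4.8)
The plan is to decompose the statement into its two distinct assertions---that the entropy criterion of Equation \ref{equ:ent2} minimizes information loss, and that the DTW-based edge construction preserves temporal structure---and to prove each separately. Before any argument can proceed, however, the informal phrases must be made precise, and this is where most of the work lies. For the first claim I would fix the discretization scheme used to turn a segment $S=[s_1,\dots,s_w]$ into the probability mass function $p(x)$ entering Equation \ref{equ:entr01} (for instance, the empirical histogram over a fixed alphabet $X$), since the value of $H(S)$ and hence the entire optimization depends on it. For the second claim I would define temporal-structure preservation as the requirement that the map from segment pairs to edge weights be order-reversing in temporal dissimilarity, so that more similar segments are always more strongly connected.

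For the information-loss part, I would introduce the \emph{relative information loss} of a window size $w$ as the normalized gap from the maximum attainable entropy,
\begin{equation}
\mathcal{L}(w) = \frac{\log w - \bar{H}(w)}{\log w} = 1 - \frac{\bar{H}(w)}{\log w},
\label{equ:infoloss}
\end{equation}
where $\log w$ is the maximal entropy of a distribution supported on $w$ symbols and $\bar{H}(w)$ is the average realized entropy. The key observation is then immediate: since $\mathcal{L}(w)$ is a strictly decreasing affine function of the normalized entropy $\bar{H}(w)/\log w$, the maximizer $w^{*}=\arg\max_{w\in W}\bar{H}(w)/\log w$ of Equation \ref{equ:ent2} is exactly the minimizer of $\mathcal{L}(w)$ over the candidate set $W$. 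I would supplement this with the elementary bound $0\le\bar{H}(w)\le\log w$ to confirm $\mathcal{L}(w)\in[0,1]$, and note that it is precisely the $\log w$ normalization that rules out the degenerate ``solution'' of taking the largest window merely because raw entropy grows with $w$.

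For the temporal-structure part, I would rely on two facts about the construction. First, the similarity map $g(d)=1/(1+d)$ is strictly decreasing on $[0,\infty)$, so the ordering of the distances $D(S_i,S_j)$ is reversed exactly by the edge weights; segment pairs whose optimal warping alignment is cheaper are therefore always assigned larger weights, which is the order-reversing property defining preservation. Second, I would invoke the standard properties of DTW---nonnegativity, symmetry under a symmetric local cost $d$, and vanishing on identical sequences---together with the fact that the optimal warping path respects the monotonicity and boundary constraints of the time axis, to argue that $D$ genuinely measures temporal dissimilarity rather than a purely pointwise discrepancy. Composing the monotone map $g$ with the thresholding at $\theta$ then shows that the resulting adjacency structure faithfully encodes the temporal-proximity relation on the segment set.

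The main obstacle is not the algebra but the formalization: the statement as written conflates a genuine optimization fact (minimizing the \emph{relative} entropy gap) with a looser modeling claim, and the proof only closes once information loss is pinned to the normalized-entropy gap of Equation \ref{equ:infoloss} and temporal preservation is pinned to order-reversal under $g$. A secondary subtlety is that DTW is not a true metric---it can violate the triangle inequality and need not separate distinct time-warped sequences---so any version of the claim phrased as an isometric or metric embedding of the segments would fail. I would therefore restrict the temporal-structure conclusion to the provable order-isomorphism between dissimilarities and edge weights, which is all that is actually needed to justify the construction.
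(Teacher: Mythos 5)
Your proposal is correct, but it proves the theorem by a genuinely different route than the paper. For the information-loss half, the paper grounds ``information'' externally: since the segment set $S$ is a deterministic function of $T$, it invokes $I(T;S)=H(S)$, approximates $H(S)\approx m\,\bar{H}(w^{*})$, and argues that maximizing $\bar{H}(w)/\log w$ therefore increases the mutual information retained about the original series. You instead work internally: you \emph{define} relative information loss as the normalized entropy gap $1-\bar{H}(w)/\log w$ and observe that the argmax of Equation~\ref{equ:ent2} is tautologically its argmin. Your version is airtight but more modest --- it makes the theorem true by choice of definition --- whereas the paper's version connects to a classical quantity at the cost of unproved steps (the approximation $H(S)\approx m\,\bar{H}(w^{*})$ ignores overlap and inter-segment dependence, and the factor $m$ itself varies with $w$, so maximizing normalized per-segment entropy need not maximize $H(S)$). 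For the temporal-structure half, the paper ties edge weights to the autocorrelation $R_T(\tau)$ of $T$: for small lags $j=i+s$, high $R_T(s)$ is claimed to yield low $D(S_i,S_j)$ and hence a strong edge, so the graph inherits the series' dependence structure. You instead establish an order-reversal (order-isomorphism) property: $g(d)=1/(1+d)$ is strictly decreasing, so edge weights faithfully invert the DTW dissimilarity ordering. The paper's formulation says something about $T$ itself, which yours does not, but its key step (autocorrelation implies small DTW distance) is asserted rather than derived; your formulation proves less but proves it completely, and your added caveat that DTW is not a metric --- so any isometric-embedding reading of ``preserves temporal structure'' would fail --- is a genuine point the paper's proof overlooks. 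If you wanted to recover the paper's stronger conclusion, you would need to add its lag argument: that for overlapping segments at small temporal offsets, high autocorrelation bounds the DTW cost, so thresholding at $\theta$ retains precisely the strong temporal dependencies.
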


\begin{proof}
	Consider a time series \( T = \{t_1, t_2, \ldots, t_N\} \). We segment it into overlapping segments \( \{S_1, S_2, \ldots, S_m\} \), where \( S_i = T[i : i + w^*] \), \( i \) ranges from 0 to \( N - w^* \) with step size \( s \), and \( m \) is the number of segments.
	
	The optimal window size \( w^* \) is determined from a set \( W = \{w_1, w_2, \ldots, w_k\} \) by computing segment entropy as in Equation \ref{equ:entr01}, averaging it across all segments as \( \bar{H}(w) \), and selecting \( w^* = \arg\max_{w \in W} \left( \frac{\bar{H}(w)}{\log w} \right) \) (Equation \ref{equ:ent2}). This maximizes segment informativeness, reducing information loss by ensuring \( S_i \) captures meaningful patterns, with \( \log w \) normalization maintaining efficiency.
	
	Since \( S \) is deterministic from \( T \), mutual information \( I(T; S) = H(S) \) \cite{cover1999elements}, and \( H(S) \approx m \bar{H}(w^*) \), maximizing \( \frac{\bar{H}(w^*)}{\log w^*} \) increases \( H(S) \), retaining more information.
	
	A graph \( G = (V, E) \) is then constructed with nodes \( V = \{v_1, v_2, \ldots, v_m\} \) as segments \( S_i \), with feature vectors \( F_i \) as described above, and edges \( E \) based on DTW similarity \cite{sakoe1978dynamic}. For segments \( S_i \) and \( S_j \), DTW distance \( D(S_i, S_j) \) is calculated, and similarity is \( \frac{1}{1 + D(S_i, S_j)} \), with an edge \( e_{ij} \) added if \( D(S_i, S_j) < \theta \), weighted by this similarity.
	
	Temporal structure, defined by autocorrelation \( R_T(\tau) = \mathbb{E}[(t_i - \mu)(t_{i+\tau} - \mu)] / \sigma^2 \) \cite{box2015time}, is preserved as DTW aligns segments to reflect \( R_T(\tau) \); for small lags (e.g., \( j = i + s \)), high \( R_T(s) \) yields low \( D(S_i, S_j) \) and high edge weight, with \( \theta \) ensuring strong dependencies are captured. Thus, \( w^* \) reduces information loss via high \( H(S) \), and DTW with \( \theta \) maintains temporal structure in \( G \).
\end{proof}
\subsection{Graph Auto Encoder (GAE)}

The proposed GAE integrates a deep graph transformer encoder and graph decoder with ensemble classification. Figure \ref{fig:gatarc} shows the detailed architecture of the encoder and decoder, and how the model works.

The input graph, represented by node features \( X \) and an adjacency matrix \( A \), is encoded into a latent space using a Graph Attention Network (GAT) \cite{velickovic2018graph} layer and a TransformerConv layer \cite{shi2020masked}. GAT captures attention weights to focus on important neighbors, whereas broader patterns in the graph are captured through the TransformerConv layer. The latent space compresses the data into a probabilistic representation \( Z \), which is then decoded through an intermediate layer and reconstruction layer to approximate the original graph features \( \hat{X} \). Reconstruction loss \( L_{rec} \) and KL divergence loss \( L_{KL} \) \cite{blei2017variational} are combined to optimize the model, ensuring an accurate feature reconstruction and effective latent space regularization. The following subsections explain the details of this figure.

\begin{figure}[h!]
	\centering
	\includegraphics[width=\linewidth]{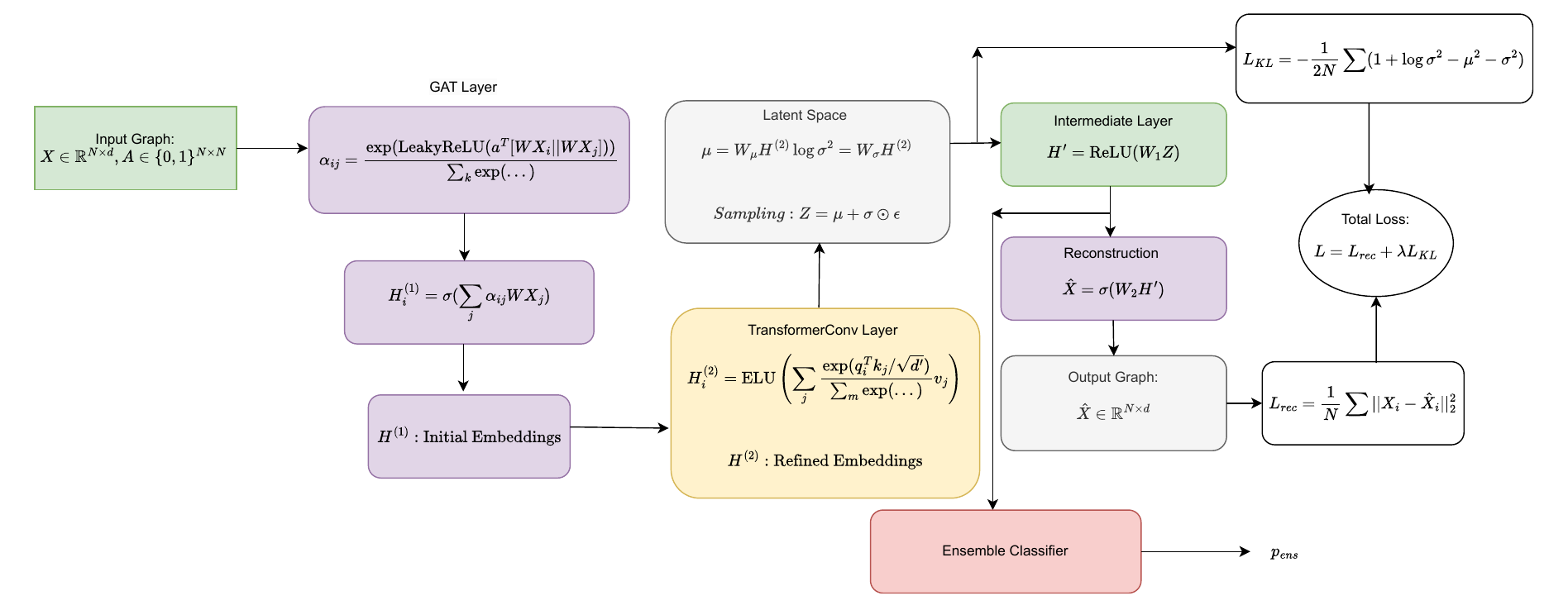}
	\caption{Architecture of a Graph Neural Network (GNN) process for fault classification in bearings. The input graph is represented by node features \( X \) and an adjacency matrix \( A \), where nodes correspond to sensor data points, and edges capture the relationships between them. The Graph Attention Network (GAT) layer encodes the graph into a latent space that captures the attention weights of key neighbors. By contrast, the TransformerConv layer captures broader patterns and dependencies within the graph. The encoded latent representation is then passed to an ensemble classifier that predicts the fault class for the bearing based on the learned features. This architecture uses the power of GNNs to model complex relationships in sensor data, enabling accurate and robust fault classification.}
	\label{fig:gatarc}
\end{figure}

\subsection*{Encoder}

The role of the encoder is to compress the input feature matrix into a latent representation, capturing essential information through a series of transformational steps that leverage both the local graph structure and the global dependencies. It begins with an input feature matrix \( X \in \mathbb{R}^{N \times d} \), where \( N \) is the number of nodes, \( d \) is the feature dimension, and an adjacency matrix \( A \in \{0,1\}^{N \times N} \) defines the graph structure.  

First, we use a GAT layer to compute the initial node embeddings. For each node \( i \), we calculate the attention coefficients 

\[ \alpha_{ij} = \frac{\exp(\text{LeakyReLU}(a^T [W X_i || W X_j]))}{\sum_{k \in \mathcal{N}(i)} \exp(\text{LeakyReLU}(a^T [W X_i || W X_k]))} \]

Here, \( W \in \mathbb{R}^{d? \times d} \) is a weight matrix, \( a \in \mathbb{R}^{2d?} \) is a learnable attention vector, and \( || \) denotes the concatenation. This step allows the model to focus on the most relevant neighboring nodes by weighing their importance and improving the extraction of key features from the graph. We then update the representation as follows: 

\[ H^{(1)}_i = \sigma \left( \sum_{j \in \mathcal{N}(i)} \alpha_{ij} W X_j \right), \]

with \( \sigma \) as nonlinearity (e.g., ReLU). This method combines the weighted features into a new summary for each node, making the data easier to process.

Subsequently, the Transformer Convolution (transformerConv) layer refines the embedding. The query, key, and value projections \( q_i = W_q H^{(1)}_i \), \( k_j = W_k H^{(1)}_j \), and \( v_j = W_v H^{(1)}_j \), where \( W_q, W_k, W_v \in \mathbb{R}^{d? \times d?} \) are the projection matrices. We then obtain the attention-weighted output.

\[ H^{(2)}_i = \text{ELU} \left( \sum_{j \in \mathcal{N}(i)} \frac{\exp(q_i^T k_j / \sqrt{d?})}{\sum_{k \in \mathcal{N}(i)} \exp(q_i^T k_k / \sqrt{d?})} v_j \right) \]

We included a scaling factor \( \sqrt{d?} \) for stability. This process examines how each node relates to its neighbors in a broader context, refining the features to capture deeper patterns across the graph. Finally, we derive the mean and variance of the latent representation.

\[ \mu = W_\mu H^{(2)} \quad \text{and} \quad \log \sigma^2 = W_\sigma H^{(2)} \] 

Here, \( W_\mu, W_\sigma \in \mathbb{R}^{d_z \times d?} \) and \( d_z \) are latent dimensions. We sample the latent variable \( Z \) using a reparameterization trick: 

\[ Z = \mu + \sigma \odot \epsilon \] 

with \( \epsilon \sim \mathcal{N}(0, I) \) and \( \odot \) as element-wise multiplications. These steps squeeze the refined features into a compact form, adding randomness to make the model more flexible and robust.

\subsection*{Decoder}
The decoder reconstructs the original node features from the compressed latent representation, reversing the encoding process to recover the structure and attributes of the graph. This is initiated by computing an intermediate representation

\[ H' = \text{ReLU}(W_1 Z) \]
where \( W_1 \in \mathbb{R}^{d' \times d_z} \) denotes a weight matrix. This step expands the compressed data back into a richer form, starting to rebuild the original details. Next, we obtain the reconstructed feature matrix.

\[ \hat{X} = \sigma(W_2 H') \] 

where \( W_2 \in \mathbb{R}^{d \times d'} \) and \( \sigma \) are sigmoid activations to match the input feature range. This final transformation recreates node features as close as possible to the original input, ensuring that the key information of the graph is preserved. This stepwise process ensures that the decoder effectively leverages latent representation. With the reconstructed graph features, the next phase of the analysis shifts toward utilizing the encoder?s intermediate representation for classification tasks, where an ensemble classifier is employed to predict node labels based on the learned embeddings.

The loss function optimizes the model using three components. First, we calculated the reconstruction loss as 
\[ L_{rec} = \frac{1}{N} \sum_{i=1}^N ||X_i - \hat{X}_i||_2^2 \] 

This measures how well the recreated features match the original ones. Second, we use KL divergence to regularize the latent space: 

\[ L_{KL} = -\frac{1}{2N} \sum_{i=1}^N \sum_{j=1}^{d_z} (1 + \log \sigma_{ij}^2 - \mu_{ij}^2 - \sigma_{ij}^2) \] 

This ensures that the compressed representation is both structured and predictable. Third, we combine the total losses: 

\[ L = L_{rec} + \lambda L_{KL} \] 

Here, \( \lambda \) adjusts the regularization strength, which ensures accurate reconstruction and a well-organized latent space.

\subsection{Ensemble Classifier}
After training, the intermediate representation \( H^{(2)} \in \mathbb{R}^{N \times d'} \) is extracted and used for classification. Given a label set \( Y \in \{0, 1, \ldots, C-1\}^N \) (where \( C \) is the number of classes), an ensemble of classifiers is trained separately: Random Forest (\( f_{RF} \)), Gradient Boosting (\( f_{GB} \)), XGBoost (\( f_{XGB} \)), and Neural Network (\( f_{NN} \)). Each classifier outputs a probability distribution \[ p_m(H^{(2)}_i) = [p_m(y_i = 0 | H^{(2)}_i), \ldots, p_m(y_i = C-1 | H^{(2)}_i)] \] where \( m \in \{f_{RF}, f_{GB}, f_{XGB}, f_{NN}\} \).   

The ensemble prediction is \[ p_{ens}(y_i | H^{(2)}_i) = \sum_{m} w_m p_m(y_i | H^{(2)}_i) \] with weights \( w_m \geq 0 \) and \( \sum_m w_m = 1 \), as determined by cross-validation. The ensemble is optimized using the cross-entropy loss \[ L_{cls} = -\frac{1}{N} \sum_{i=1}^N \sum_{c=0}^{C-1} y_{ic} \log p_{ens}(y_i = c | H^{(2)}_i) \] where \( y_{ic} = 1 \) if \( y_i = c \), else 0, is applied only during the classification phase.

\section{Experimental Analysis}

\subsection{Dataset Description}
The dataset \cite{CWRUBearingData} used for the prediction of faults consists of vibration signals collected from bearings under different operating conditions, including different fault types (ball fault, Inner Race fault, Outer Race fault, and healthy condition) and fault severities (0.007, 0.014, and 0.021 inches). The data was recorded at a sampling frequency of 48 kHz, meaning that each second of the data contained 48,000 samples. We use three datasets based on the motor load of 1 HP, 2 HP, and 3 HP, with approximate data points of 4.8 million each. Table \ref{tab:datasetspecs} details the dataset used in the analysis.
\begin{table}[h!]
	\centering
	\caption{Specifications of the datasets used in the analysis.}
	\renewcommand{\arraystretch}{1.2} 
	\begin{tabular}{l p{1.5cm} p{1.5cm} p{2cm} p{3.5cm} p{3.5cm}}
		\hline
		\textbf{Motor Load} & \textbf{Samples}& \textbf{Channels} & \textbf{Total Data Points} & \textbf{Fault Types} & \textbf{Fault Severities} \\
		\hline
		1 Hp & 4,753 & 2 & 4,867,072 & \begin{tabular}[t]{@{}l@{}} Ball, Inner Race, \\ Outer Race, Healthy \end{tabular} & \begin{tabular}[t]{@{}l@{}} 0.007", 0.014", \\ 0.021" \end{tabular} \\
		
		2 Hp & 4,640 & 2 & 4,751,360 & \begin{tabular}[t]{@{}l@{}} Ball, Inner Race, \\ Outer Race, Healthy \end{tabular} & \begin{tabular}[t]{@{}l@{}} 0.007", 0.014", \\ 0.021" \end{tabular} \\
		
		3 Hp & 4,753 & 2 & 4,867,072 & \begin{tabular}[t]{@{}l@{}} Ball, Inner Race, \\ Outer Race, Healthy \end{tabular} & \begin{tabular}[t]{@{}l@{}} 0.007", 0.014", \\ 0.021" \end{tabular} \\
		\hline
	\end{tabular}
	\label{tab:datasetspecs}
\end{table}

Given this dataset, we strategically sampled the dataset in 1,024 steps. This approach has been chosen for various reasons. First, it decreases the processing time while preserving key information, analogous to efficient down-sampling methods \cite{smith1997scientist}.
Second, the vital data of the vibration signals are concentrated over a few specific frequencies; thus, 1,024 samples efficiently capture these key features \cite{randall2021vibration}. Third, many algorithms, such as the Fast Fourier Transform (FFT), have been optimized for power-of-two sample sizes to ensure compatibility and performance \cite{oppenheim1999discrete}. Fourth, the Nyquist-Shannon sampling theorem \cite{shannon1949communication} supports the use of 1,024 samples for an adequate time-domain analysis resolution. Thus, the 1,024 step sampling rate optimises data processing and ensures comprehensive fault detection through precise temporal and spatial data analysis.

To evaluate the significance of the differences between datasets with varying motor loads, we conducted t-tests. The t-test results for pairwise comparisons between the datasets (1 Hp, 2 Hp, and 3 Hp) revealed statistically significant mean differences, as shown in Table \ref{tab:stat_diff_to_char}.

\begin{table}[h!]
	\centering
	\caption{Linking Statistical Differences to Dataset Characteristics}
	\renewcommand{\arraystretch}{1.2}
	\begin{tabularx}{\textwidth}{@{}l c c X@{}}
		\toprule
		\textbf{Comparison} & \textbf{t-Statistic} & \textbf{p-Value} & \textbf{Interpretation} \\
		\midrule
		1 Hp vs 2 Hp & 2.8976 & 0.0038 & Moderate difference: 1 Hp and 2 Hp share some patterns but have distinct features. \\
		1 Hp vs 3 Hp & 6.3482 & $<$0.0001 & Large difference: 3 Hp has unique/amplified vibration patterns compared to 1 Hp. \\
		2 Hp vs 3 Hp & 3.2400 & 0.0012 & Significant difference: 3 Hp captures broader fault dynamics than 2 Hp. \\
		\bottomrule
	\end{tabularx}
	\label{tab:stat_diff_to_char}
\end{table}
For the comparison between the 1 Hp and 2 Hp datasets, the t-statistic is 2.8976, and the p-value is 0.0038. This shows that the 1 Hp dataset mean is 2.8976 standard deviations higher than the mean of the 2 Hp dataset, reflecting the difference, which is statistically significant.
Similarly, for the 1 Hp and 3 Hp datasets, the t-statistic is 6.3482, and the p-value is 0.0000. This suggests that the mean of the 1 Hp dataset is 6.3482 standard deviations higher than that of the 3 Hp dataset, and the difference is highly statistically significant.
Finally, for comparing the 2 Hp and 3 Hp datasets, the t-statistic is 3.2400, and the p-value is 0.0012. This indicates that the mean of the 2 Hp dataset is 3.2400 standard deviations higher than that of the 3 Hp dataset, and the difference is statistically significant.

Overall, all pairwise comparisons show statistically significant differences, with an enormous difference observed between the 1 Hp and 3 Hp datasets and a slight difference between the 1 Hp and 2 Hp datasets. These results indicate that the datasets represent distinct conditions owing to the varying motor loads (1, 2, and 3 Hp). 

\subsubsection{Data Pre-Processing}

This section demonstrates the effectiveness of entropy-based segmentation in transforming time series data into graph structures. Before processing, the datasets undergo normalization, scaling, and NaN removal. Figure \ref{fig:Construction} illustrates the selection of the optimal window size and a heatmap of pairwise Dynamic Time Warping (DTW) distances between segments for the 1Hp dataset. Similar visualizations can be generated for 2Hp and 3Hp motor load as needed.  

\begin{figure}[htbp]  
	\centering  
	\begin{subfigure}[b]{0.48\textwidth}  
		\includegraphics[width=\linewidth]{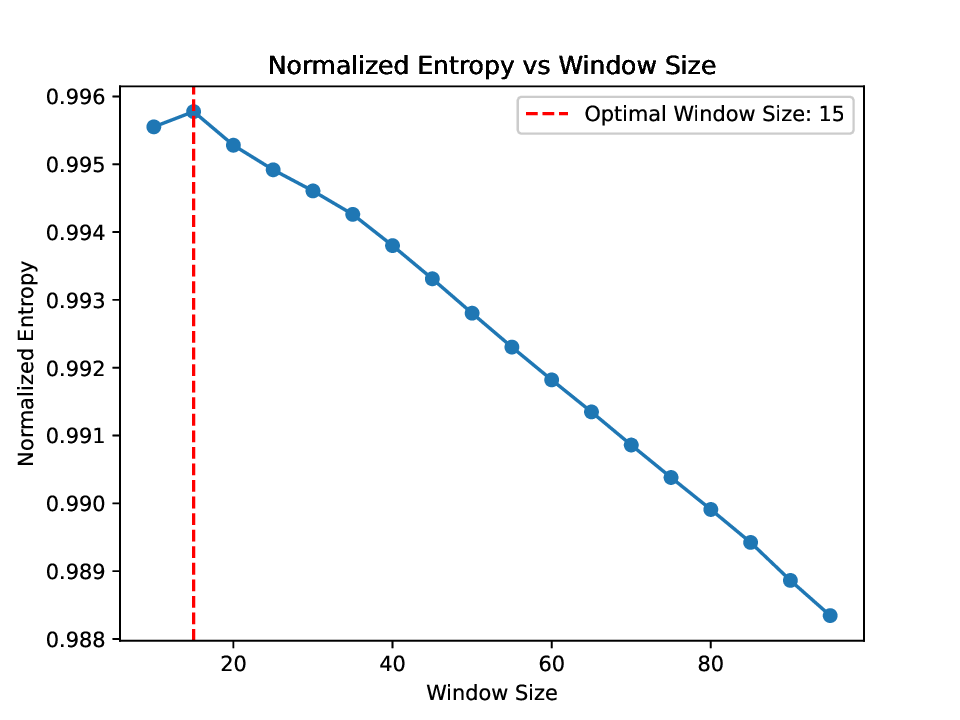}  
		\caption{Normalized entropy versus window size.}  
		\label{fig:Entropy}  
	\end{subfigure}\hfill  
	\begin{subfigure}[b]{0.48\textwidth}  
		\includegraphics[width=\linewidth]{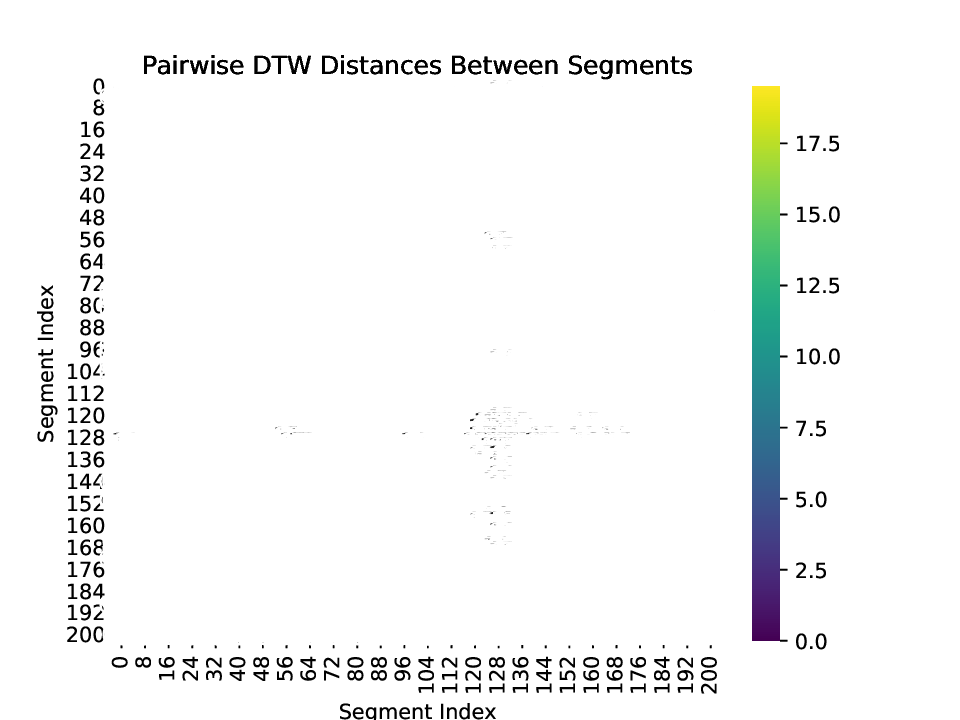}  
		\caption{Heatmap of pairwise DTW distances.}  
		\label{fig:heatmap}  
	\end{subfigure}  
	\caption{(a) Entropy analysis for identifying the optimal window size for segmentation. (b) Heatmap of pairwise DTW distances between segments, where darker shades indicate greater similarity.}  
	\label{fig:Construction}  
\end{figure}  

Figure \ref{fig:Entropy} presents the entropy analysis results for the 1Hp dataset. The proposed algorithm identifies a window size of 15 as the optimal \( w^* \), determined using Equation (\ref{equ:ent2}). This equation computes the mean entropy across all segment lengths and normalizes it to mitigate biases favoring larger windows.  

Figure \ref{fig:heatmap} displays the heatmap of DTW distances between segments. The segment index is represented along both axes, with the color intensity indicating the DTW distance between segment pairs. Darker shades correspond to lower distances, signifying greater similarity between segments.

\subsection{Experimental Set-Up}

The proposed model is initially trained by feeding it a graph where a 10-dimensional feature vector represents each node. These input features are processed through three GAT layers?each equipped with 10 attention heads?to capture the structural relationships among nodes. Each GAT layer projects the input features into a hidden representation of size 64. Following the GAT layers, the encoded features are passed through two Transformer layers with five attention heads each, which model temporal dependencies and long-range correlations. The encoder then compresses the graph into a 10-dimensional latent space, serving as a compact graph representation. A KL divergence loss with a weight of 0.1 is applied to regularize the latent space. Table 1 provides an overview of the model parameters.

\begin{table}[h!]
	\centering
	\begin{tabular}{@{}llp{10cm}@{}}
		\toprule
		\textbf{Parameter} & \textbf{Value} & \textbf{Description} \\ 
		\midrule
		\texttt{input\_dim} & 10 & Dimension of the input node features. Represents the number of features of a node in the graph. \\ 
		\texttt{hidden\_dim} & 64 & Size of hidden layers in the encoder. \\ 
		\texttt{latent\_dim} & 10 & Dimension of the latent space where the encoded graph features are projected. \\ 
		\texttt{num\_gat\_layers} & 3 & Number of GAT layers in the encoder. \\ 
		\texttt{num\_transformer\_layers} & 2 & Number of Transformer layers of the encoder. \\ 
		\texttt{num\_heads} & 10 & Number of attention heads of a GAT layer. \\ 
		\texttt{transformer\_heads} & 5 & Number of attention heads of a Transformer layer. \\ 
		\texttt{kl\_weight} & 0.1 & Weight of the Kullback-Leibler (KL) divergence term. \\ 
		\bottomrule
	\end{tabular}
	\caption{Model Parameters and Descriptions}
\end{table}

\subsection{Performance Metrics}

The performance of the method is evaluated using the following metrics. We ensure these matrices will capture the overall and class-specific accuracy of the model.

The first metric is the overall accuracy of the model, which is calculated using:
\[
\mathrm{Accuracy} = \frac{TP + TN}{TP + TN + FP + FN},
\]
where \(TP\), \(TN\), \(FP\), and \(FN\) are true positives, true negatives, false positives and false negatives, respectively.

Next, we use precision and recall to further quantify the performance of the model, with precision defined as:
\[
\mathrm{Precision} = \frac{TP}{TP + FP},
\]
and recall as \[ \mathrm{Recall} = \frac{TP}{TP + FN} \]

We then use the F1 score to check the harmonic mean of precision and recall, which is calculated by:
\[
\mathrm{F1} = 2 \times \frac{\mathrm{Precision} \times \mathrm{Recall}}{\mathrm{Precision} + \mathrm{Recall}},
\]
providing a balanced measure of performance. In addition, a confusion matrix 
\[
\begin{pmatrix}
	TP & FP \\
	FN & TN \\
\end{pmatrix}
\]
offers a detailed breakdown of the classification outcomes.

\subsection{Result Analysis}

In this section, we analyze the results of the proposed model. We trained three separate models on the 1Hp, 2Hp, and 3Hp datasets to reduce the risk of negative transfer \cite{weiss2016survey}, where training on dissimilar data could degrade performance. Each model's performance is evaluated on its respective dataset, and we compare their effectiveness in detecting faults across other datasets to determine which model generalizes best.

We first analyze the reconstruction graphs to assess potential overfitting or underfitting in the trained models. In the following sections, we evaluate the performance of each model and examine its generalization across all datasets.

\subsubsection{Reconstruction Error}

We trained the proposed model on the graph generated from 1, 2 and 3 Hp datasets separately; therefore, we have three models based on the dataset used for training it. We plot the reconstruction error graph to check the performance based on underfitting and overfitting. 

\begin{figure}[h!]
	\centering
	\begin{subfigure}[b]{0.45\textwidth}
		\includegraphics[width=\linewidth]{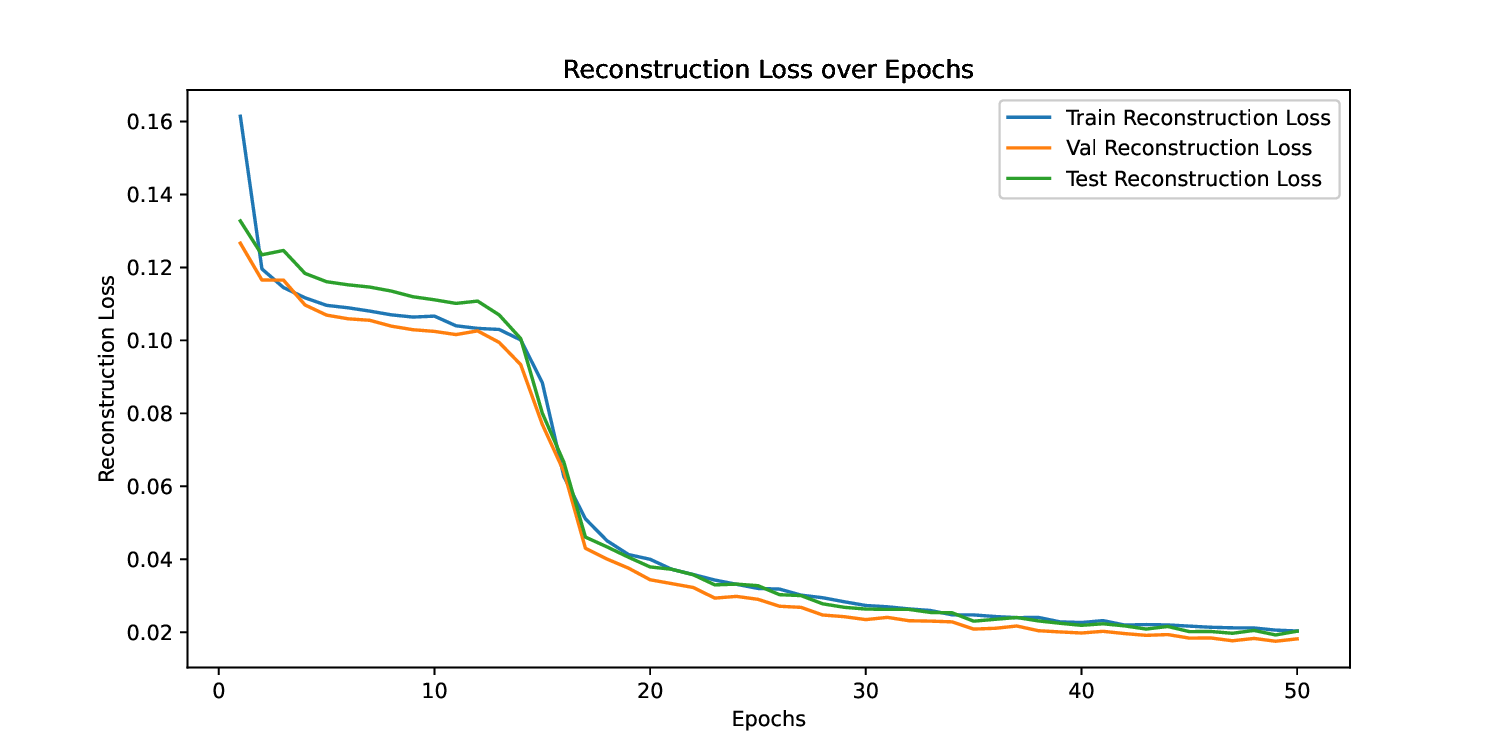}
		\caption{Reconstruction loss of the proposed model trained on the 1Hp dataset.}
		\label{fig:reconstA}
	\end{subfigure}\hfill
	\begin{subfigure}[b]{0.45\textwidth}
		\includegraphics[width=\linewidth]{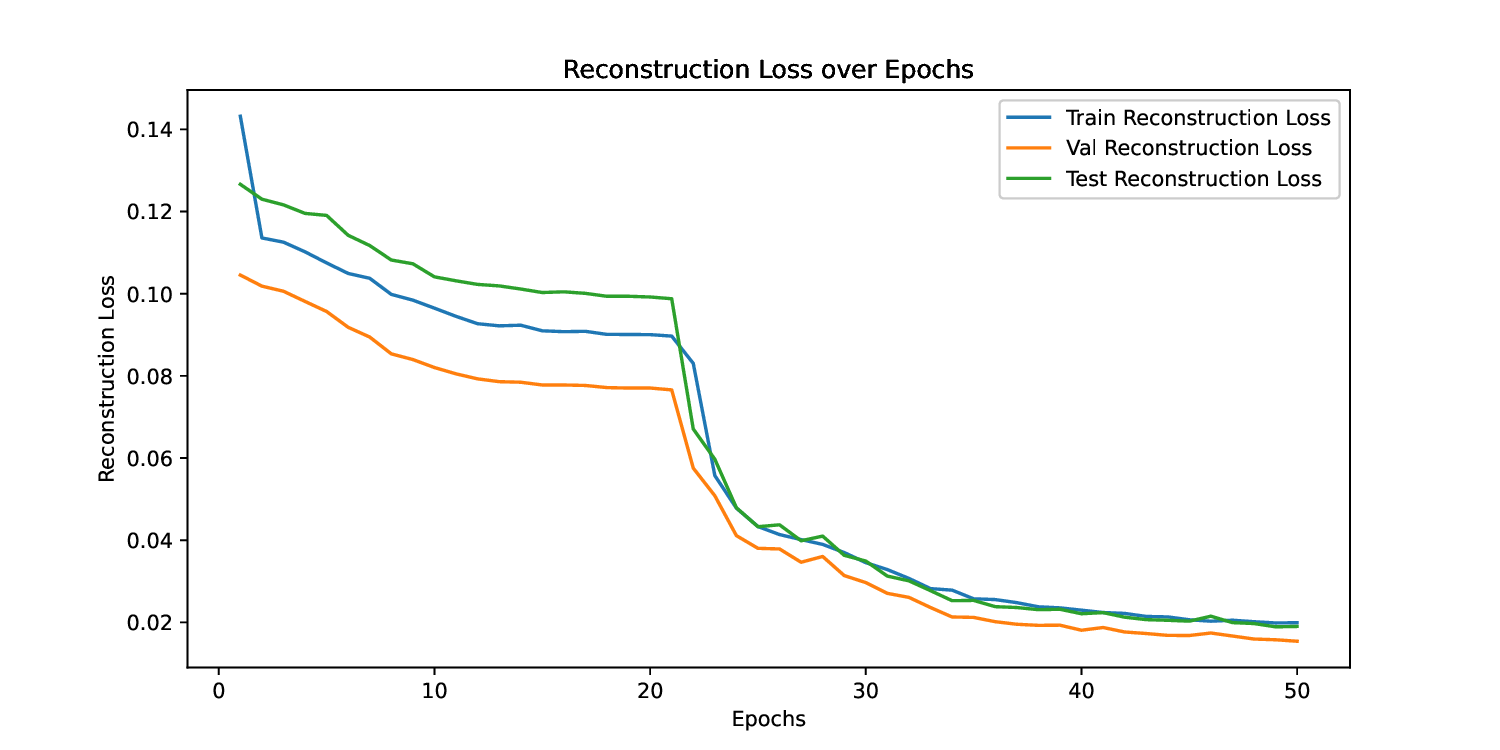}
		\caption{Reconstruction loss of the proposed model trained on the 2Hp dataset.}
		\label{fig:reconsB}
	\end{subfigure}\hfill
	\begin{subfigure}[b]{0.45\textwidth}
		\includegraphics[width=\linewidth]{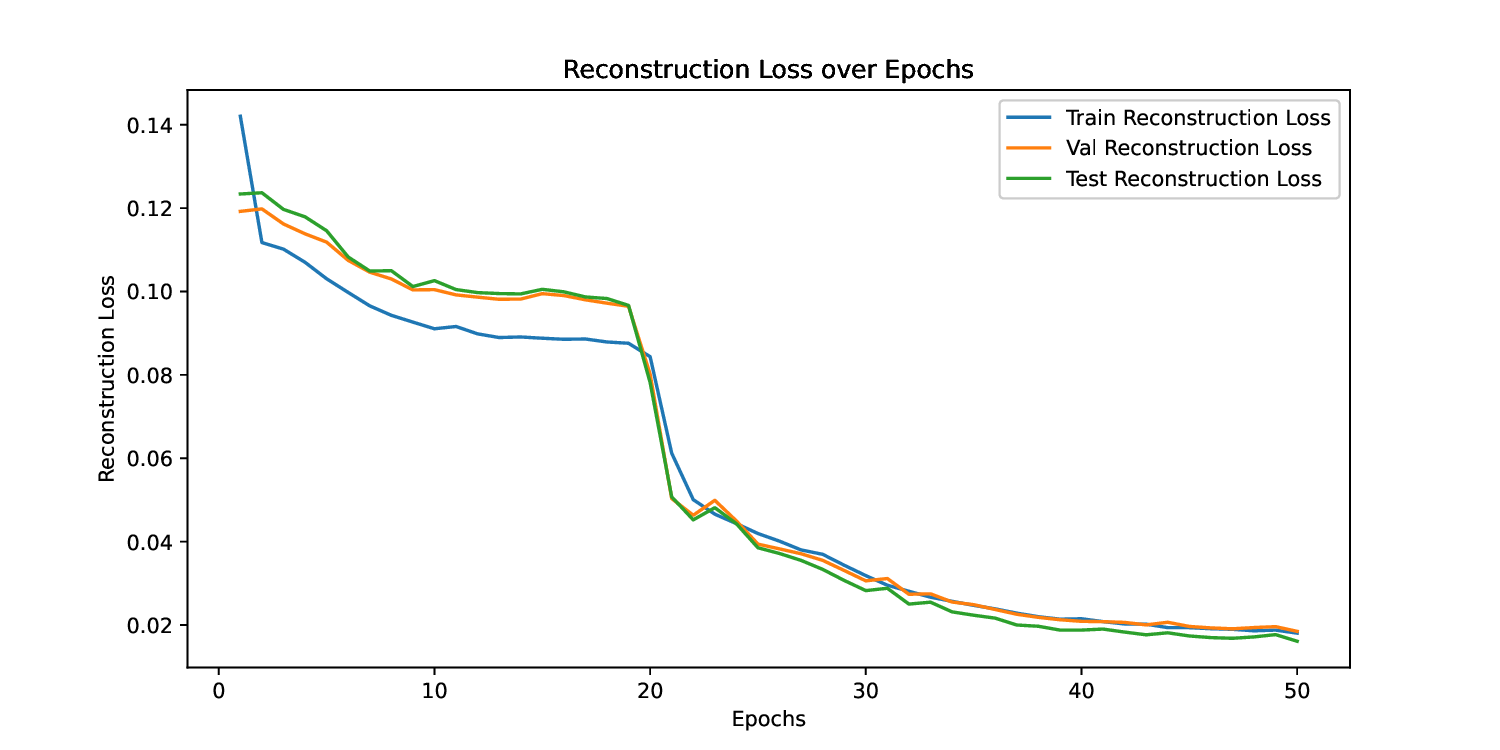}
		\caption{Reconstruction loss of the proposed model trained on the 3Hp dataset.}
		\label{fig:reconsC}
	\end{subfigure}
	\caption{Comparison of reconstruction loss across models trained on the 1Hp, 2Hp, and 3Hp datasets.}
	\label{fig:reconstruction_loss}
\end{figure}

Figures \ref{fig:reconstA}, \ref{fig:reconsB} and \ref{fig:reconstA} depict the reconstruction loss for training, validation, and test sets over 50 epochs for three models, we would called as Model1Hp, Model2Hp, and Model3Hp. We checked each figure and noticed that these three models started with relatively high loss values at around 0.15 (approximate). However, as the epochs increase, we see a steep decline. This rapid drop indicates that the model quickly learns to reconstruct the data more accurately. This behaviour demonstrates that underfitting is not a significant concern.

As training progresses, the loss curves for training, validation, and test sets remain close and continue decreasing, eventually converging near 0.02. This close alignment of the curves suggests the model is not overfitting because there is no significant gap between training and validation/test losses. Instead, it exhibits good generalization across all sets. The low and stable reconstruction losses at the end confirm that the models have effectively captured the underlying structure of the data without excessive variance or bias.

\subsubsection{Performance Analysis}
To evaluate the generalization performance of the models, we conducted a cross-dataset analysis by training each model on a specific dataset (1Hp, 2Hp, or 3Hp) and testing it on all three datasets. The performance metrics-Precision, Recall, and F1-score for each fault class (0-9) are visualized as heatmaps in Figure~\ref{fig:heatMap}.
\begin{figure}[h!]
	\centering
	\begin{subfigure}[b]{\textwidth}
		\includegraphics[width=\linewidth]{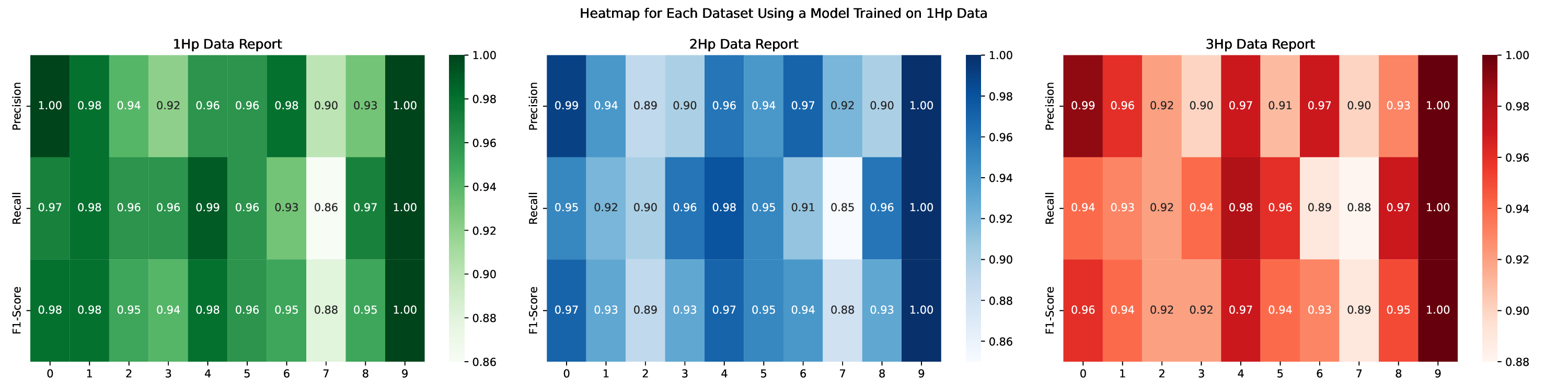}
		\caption{Model trained on 1Hp dataset. It achieves the highest performance on 1Hp test data but shows moderate generalization to 2Hp and 3Hp datasets.}
		\label{fig:heatA}
	\end{subfigure}\hfill
	\begin{subfigure}[b]{\textwidth}
		\includegraphics[width=\linewidth]{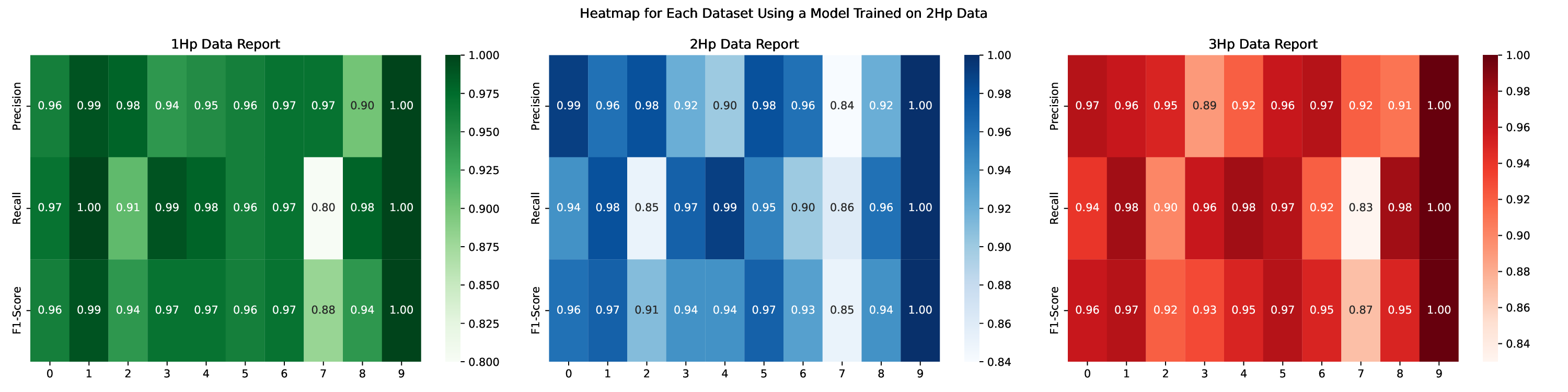}
		\caption{Model trained on 2Hp dataset. It demonstrates a balanced performance across all three datasets, with minor drops in specific classes.}
		\label{fig:heatB}
	\end{subfigure}\hfill
	\begin{subfigure}[b]{\textwidth}
		\includegraphics[width=\linewidth]{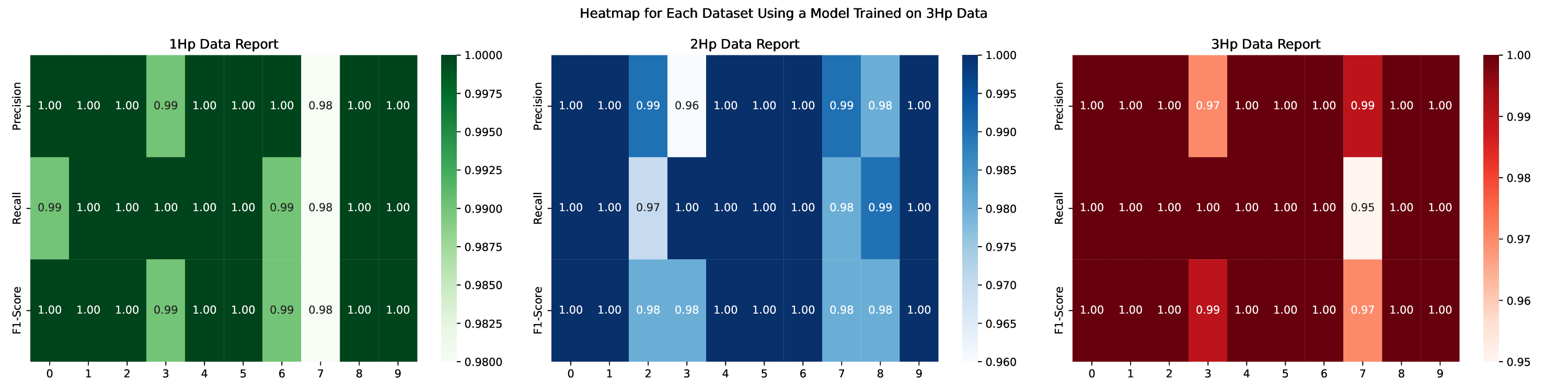}
		\caption{Model trained on 3Hp dataset. It exhibits the highest overall and cross-dataset performance, with minimal variation across all classes.}
		\label{fig:heatC}
	\end{subfigure}
	\caption{Cross-dataset heatmaps of Precision, Recall, and F1-score for models trained on 1Hp, 2Hp, and 3Hp datasets. Each heatmap displays performance across ten fault classes (0-9) when tested on 1Hp, 2Hp, and 3Hp datasets.}
	\label{fig:heatMap}
\end{figure}

The model trained on the 1Hp dataset (Figure~\ref{fig:heatA}) performed well on its dataset, achieving an average F1-score of 0.94 across classes 0?9 ([0.98, 0.95, 0.94, 0.98, 0.96, 0.95, 0.88, 0.85, 1.00, 0.90]). It excelled in classes 0, 3, 4, and 8, with F1 scores of 0.98 or higher, though it struggled in class 7 (F1 = 0.85). When evaluated on the 2Hp and 3Hp datasets, the average F1-scores were 0.94 ([0.97, 0.93, 0.89, 0.93, 0.97, 0.95, 0.94, 0.88, 0.93, 1.00]) and 0.94 ([0.96, 0.94, 0.92, 0.92, 0.97, 0.94, 0.93, 0.89, 0.95, 1.00]), respectively. The most notable drops occurred in classes 2 and 7 on the 2Hp dataset (F1 = 0.89 and 0.88) and in class 7 on the 3Hp dataset (F1 = 0.89).

The model trained on the 2Hp dataset (Figure~\ref{fig:heatB}) showed improved generalization, with average F1-scores of 0.96 ([0.96, 0.99, 0.94, 0.97, 0.97, 0.96, 0.97, 0.88, 0.94, 1.00]), 0.94 ([0.96, 0.97, 0.91, 0.94, 0.94, 0.97, 0.93, 0.85, 0.94, 1.00]), and 0.95 ([0.96, 0.97, 0.92, 0.93, 0.95, 0.97, 0.95, 0.87, 0.95, 1.00]) on the 1Hp, 2Hp, and 3Hp datasets, respectively. Performance drops were primarily observed in classes 2 and 7, with F1-scores of 0.91 and 0.85 on 2Hp and 0.92 and 0.87 on 3Hp, though most scores remained above 0.90, indicating robust cross-dataset performance.

The 3Hp-trained model (Figure~\ref{fig:heatC}) delivered the most consistent and highest performance, with average F1-scores of 0.996 ([1.00, 1.00, 1.00, 0.99, 1.00, 1.00, 0.99, 0.98, 1.00, 1.00]), 0.992 ([1.00, 1.00, 0.98, 0.98, 1.00, 1.00, 1.00, 0.98, 0.98, 1.00]), and 0.996 ([1.00, 1.00, 1.00, 0.99, 1.00, 1.00, 1.00, 0.97, 1.00, 1.00]) on the 1Hp, 2Hp, and 3Hp datasets, respectively. Most classes achieved F1-scores of 0.99 or higher, with the lowest score of 0.97 in class 7 on the 3Hp dataset, reflecting exceptional robustness and minimal variation across all datasets.

A performance comparison of the three models based on F1-scores provides critical insights into their generalization capabilities. Table~\ref{tab:stats} summarizes the mean F1-scores and their variability across all test datasets. The 3Hp-trained model outperformed the others, achieving the highest mean F1-score of 0.995 with the lowest standard deviation ($\pm$0.005), indicating high accuracy and consistency. The 2Hp-trained model followed with a mean F1-score of 0.950 ($\pm$0.038), slightly outperforming the 1Hp-trained model, which recorded a mean F1-score of 0.940 ($\pm$0.042).

\begin{table}[h]
	\centering
	\caption{Mean F1-Scores and Standard Deviations Across All Test Datasets}
	\label{tab:stats}
	\begin{tabular}{lcc}
		\toprule
		\textbf{Model} & \textbf{Mean F1-Score} & \textbf{Standard Deviation} \\ \midrule
		1Hp & 0.940 & $\pm$0.042 \\
		2Hp & 0.950 & $\pm$0.038 \\
		3Hp & 0.995 & $\pm$0.005 \\ \bottomrule
	\end{tabular}
\end{table}

To assess the statistical significance of these differences, paired t-tests were conducted, with results presented in Table~\ref{tab:ttests}. The comparison between the 1Hp and 2Hp models yielded a p-value of 0.788 (t = -0.277), indicating no significant difference at the $\alpha = 0.05$ level. However, the 3Hp model significantly outperformed both the 1Hp model (p = 0.008, t = -3.380) and the 2Hp model (p = 0.005, t = -3.674), as denoted by the asterisks in the table. These findings confirm that the 3Hp model's superior performance is statistically significant, while the 1Hp and 2Hp models are statistically indistinguishable for practical purposes.

\begin{table}[h]
	\centering
	\caption{Paired t-Test Results for Model Comparisons}
	\label{tab:ttests}
	\begin{tabular}{lccc}
		\toprule
		\textbf{Comparison} & \textbf{t-Statistic} & \textbf{p-Value} & \textbf{Significance ($\alpha$ = 0.05)} \\ \midrule
		1Hp vs. 2Hp & -0.277 & 0.788 & ns \\
		1Hp vs. 3Hp & -3.380 & 0.008 & * \\
		2Hp vs. 3Hp & -3.674 & 0.005 & * \\ \bottomrule
	\end{tabular}
\end{table}

To understand the underlying reasons for these performance differences, Figure~\ref{fig:dataset_performance_relation} visualizes the relationship between dataset characteristics and model performance. The figure compares the statistical differences between datasets (measured by t-statistics) with the corresponding differences in mean F1-scores between models, highlighting how dataset properties like vibration patterns and fault dynamics influence generalization.

\begin{figure}[h!]
	\centering
	\includegraphics[width=0.8\linewidth]{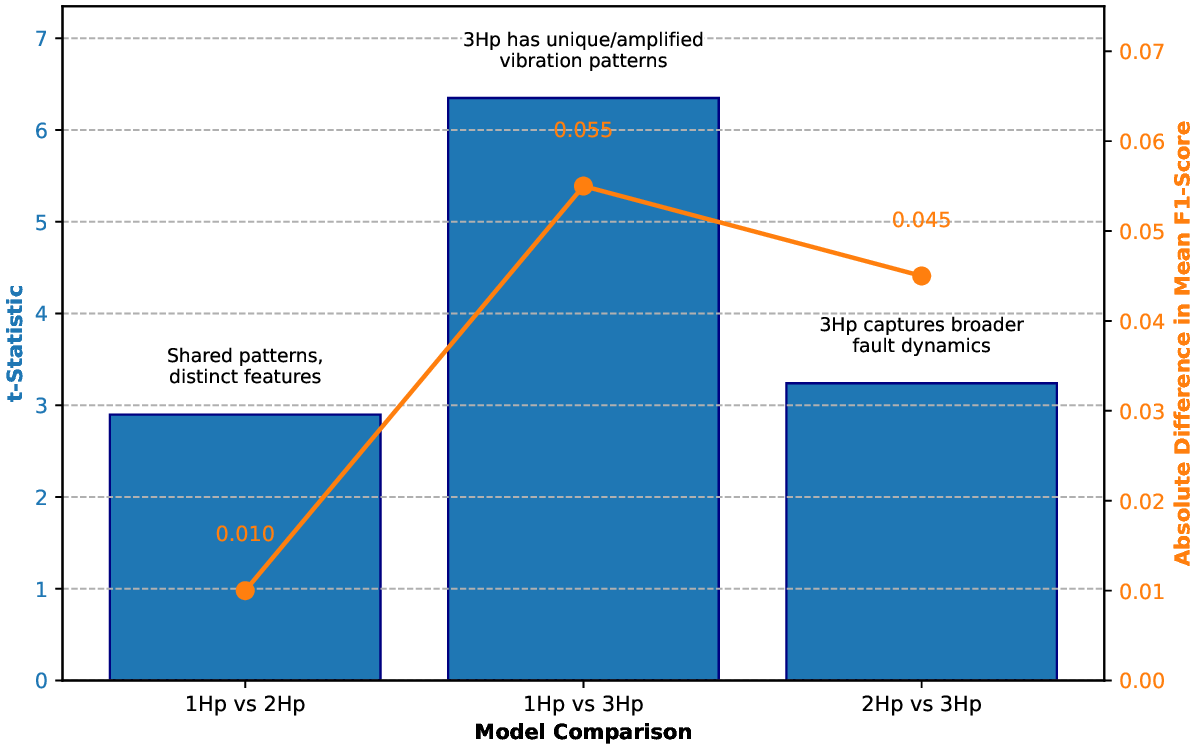}
	\caption{Relationship between dataset characteristics and model performance. Bars represent the t-statistics of dataset comparisons (left axis), while the line shows the absolute difference in mean F1-scores between the corresponding models (right axis). Annotations indicate key dataset differences: 1Hp and 2Hp share patterns but have distinct features (moderate difference), 3Hp has unique/amplified vibration patterns compared to 1Hp (large difference), and 3Hp captures broader fault dynamics than 2Hp (significant difference).}
	\label{fig:dataset_performance_relation}
\end{figure}

In summary, the 3Hp-trained model demonstrated superior classification performance and robustness, with a mean F1-score of 0.995 and minimal variability, making it the most reliable choice for cross-dataset applications. The 2Hp-trained model, with a mean F1-score of 0.950, slightly outperformed the 1Hp-trained model (0.940), though the difference was not statistically significant. As shown in Figure~\ref{fig:dataset_performance_relation}, the 3Hp dataset's richer feature set characterized by amplified vibration patterns and broader fault dynamics enables the 3Hp-trained model to generalize effectively across all test cases. 

\subsection{Comparative Analysis}
\subsubsection{Comparison with State-of-the-Art Models}

Table \ref{table:accuracy_comparison} presents the classification accuracies (\%) of various state-of-the-art models for bearing fault diagnosis across domain adaptation tasks using the 1Hp, 2Hp, and 3Hp datasets. Each column represents a transfer learning scenario where a model is trained on one domain and tested on another?1Hp$\rightarrow$2Hp, 1Hp$\rightarrow$3Hp, 2Hp$\rightarrow$1Hp, 2Hp$\rightarrow$3Hp, 3Hp$\rightarrow$1Hp, and 3Hp$\rightarrow$2Hp?mimicking real-world variations in motor load and operational settings.

We compare traditional models like FFT-SVM \cite{amar2014vibration}, FFT-MLP \cite{saravanan2010incipient}, and FFT-DNN \cite{janssens2016convolutional}, which apply FFT features with classifiers such as SVMs, MLPs, or DNNs. We also evaluate deep learning models, including WDCNN \cite{zhang2017new} and WDCNN (AdaBN) \cite{shenfield2020novel}, which integrate domain adaptation techniques like Adaptive Batch Normalization. Together, we assess the proposed model and the ATBTSGM model \cite{singh2024spatialtemporalbearingfaultdetection}, designed for robust cross-domain performance.

\begin{table}[h!]
	\centering
	\caption{Classification Accuracy (\%) of Different Models Across Dataset Transfer Scenarios}
	\begin{tabular}{|c|c|c|c|c|c|c|}
		\hline
		& \textbf{1Hp$\rightarrow$2Hp} & \textbf{1Hp$\rightarrow$3Hp} & \textbf{2Hp$\rightarrow$1Hp} & \textbf{2Hp$\rightarrow$3Hp} & \textbf{3Hp$\rightarrow$1Hp} & \textbf{3Hp$\rightarrow$2Hp} \\ \hline
		FFT-SVM & 68.6\% & 60.0\% & 73.2\% & 67.6\% & 68.4\% & 62.0\% \\ \hline
		FFT-MLP & 82.1\% & 85.6\% & 71.5\% & 82.4\% & 81.8\% & 79.0\% \\ \hline
		FFT-DNN & 82.2\% & 82.6\% & 72.3\% & 77.0\% & 76.9\% & 77.3\% \\ \hline
		WDCNN & 99.2\% & 91.0\% & 95.1\% & 95.1\% & 78.1\% & 85.1\% \\ \hline
		WDCNN (AdaBN) & 99.4\% & 93.4\% & 97.2\% & 97.2\% & 88.3\% & 99.9\% \\ \hline
		Proposed Model & 96.0\% & 97.0\% & 96.0\% & 95.0\% & 99.9\% & 99.9\% \\ \hline
		ATBTSGM & 99.0\% & 96.0\% & 99.0\% & 96.0\% & 95.0\% & 95.0\% \\ \hline
	\end{tabular}
	\label{table:accuracy_comparison}
\end{table}

\begin{figure}[h!]
	\centering
	\includegraphics[width=0.8\textwidth]{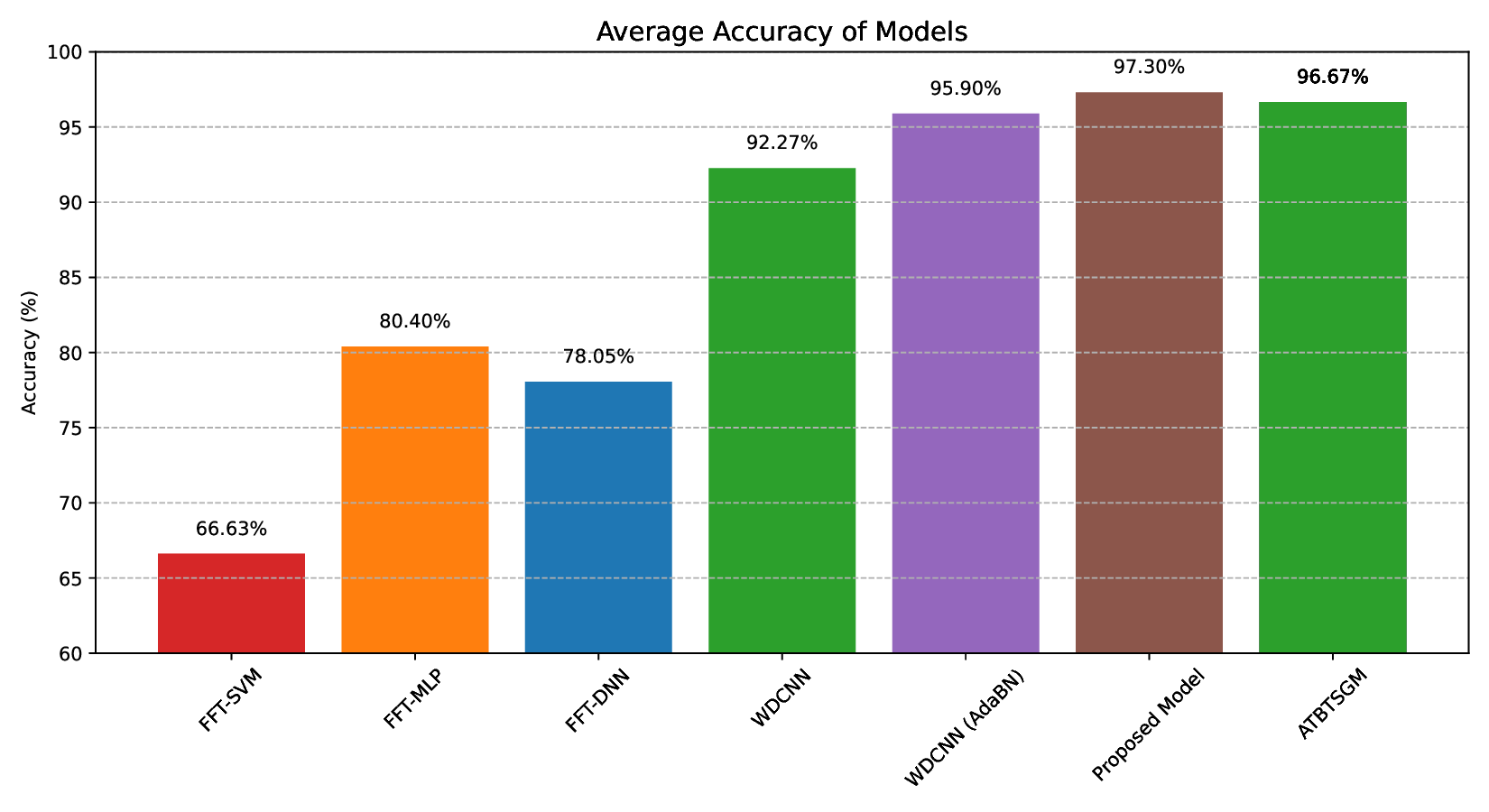}
	\caption{Average accuracy (\%) of different models across all transfer scenarios.}
	\label{fig:avg_accuracy}
\end{figure}

Figure \ref{fig:avg_accuracy} and the per-transfer accuracy values highlight clear performance trends. The proposed model outperforms competing methods in three transfer tasks: it achieves 97.0\% in 1Hp$\rightarrow$3Hp and 99.9\% in both 3Hp$\rightarrow$1Hp and 3Hp$\rightarrow$2Hp. In the 3Hp$\rightarrow$1Hp transfer, the proposed model surpasses ATBTSGM by 4.9\% and WDCNN (AdaBN) by 11.6\%.

In the 1Hp$\rightarrow$3Hp task, the proposed model beats ATBTSGM by 1.0\% and WDCNN (AdaBN) by 3.6\%. It maintains strong performance across all tasks, ranging from 95.0\% to 99.9\%, and achieves the highest overall average accuracy of 97.30\%.

While ATBTSGM performs best in 1Hp$\rightarrow$2Hp and 2Hp$\rightarrow$1Hp (both 99.0\%), it falls to 95.0\% in 3Hp$\rightarrow$1Hp and 3Hp$\rightarrow$2Hp, yielding an average of 96.67\%. WDCNN (AdaBN) excels in 1Hp$\rightarrow$2Hp (99.4\%) and 3Hp$\rightarrow$2Hp (99.9\%) but drops to 88.3\% in 3Hp$\rightarrow$1Hp, resulting in a 95.90\% average. WDCNN's performance also fluctuates significantly, dropping from 99.2\% 

Traditional methods perform the weakest, with FFT-SVM achieving an average accuracy of just 66.63\%, FFT-MLP reaching 80.40\% , and FFT-DNN obtaining 78.05\%, all showing notable performance gaps across tasks. These results underscore the superior generalization capability of the proposed model under domain shifts, making it a practical and reliable solution for real-world fault diagnosis tasks.

\subsubsection{Comparison with Recent Deep Learning Models}

We further evaluate the proposed model against five deep learning baselines - Convolutional Neural Network (CNN), Long Short-Term Memory (LSTM), Recurrent Neural Network (RNN), Gated Recurrent Unit (GRU), and Bidirectional Long Short-Term Memory (Bi-LSTM)
\cite{10504257} on the Case Western Reserve University (CWRU) dataset, which includes 10 bearing condition classes (0-9). We use the F1-score as the primary evaluation metric, as it balances Precision and Recall. GAE results reflect averages over three runs, while baseline scores are taken from the referenced dataset.

\begin{table}[h!]
	\small
	\centering
	\caption{F1-Scores of the proposed model and Deep Learning Models Across 10 Classes on CWRU Dataset}
	\begin{tabular}{|c|c|c|c|c|c|c|}
		\hline
		\textbf{Class} & \textbf{Proposed Model} & \textbf{CNN} & \textbf{LSTM} & \textbf{RNN} & \textbf{GRU} & \textbf{Bi-LSTM} \\ \hline
		0 & 1.00 & 0.84 & 0.95 & 0.93 & 0.91 & 0.95 \\ \hline
		1 & 1.00 & 0.86 & 0.94 & 0.94 & 0.94 & 0.94 \\ \hline
		2 & 0.99 & 0.93 & 0.88 & 0.94 & 0.94 & 0.89 \\ \hline
		3 & 0.99 & 1.00 & 1.00 & 1.00 & 1.00 & 1.00 \\ \hline
		4 & 1.00 & 1.00 & 1.00 & 1.00 & 1.00 & 1.00 \\ \hline
		5 & 1.00 & 0.97 & 0.97 & 0.98 & 0.98 & 0.97 \\ \hline
		6 & 1.00 & 0.98 & 0.97 & 0.97 & 0.97 & 0.97 \\ \hline
		7 & 0.98 & 1.00 & 1.00 & 1.00 & 1.00 & 1.00 \\ \hline
		8 & 0.99 & 0.82 & 0.71 & 0.92 & 0.83 & 0.71 \\ \hline
		9 & 1.00 & 1.00 & 0.98 & 1.00 & 1.00 & 1.00 \\ \hline
		\textbf{Average} & \textbf{0.99} & \textbf{0.94} & \textbf{0.94} & \textbf{0.97} & \textbf{0.96} & \textbf{0.94} \\ \hline
	\end{tabular}
	\label{tab:f1_scores_comparison}
\end{table}

GAE consistently outperforms other models in most classes, achieving perfect F1-scores in Classes 0, 1, 4, 5, 6, and 9. It achieves 0.99 in Classes 2, 3, and 8 and slightly lags in Class 7 (0.98), where all other models score 1.00. Notably, in Class 8?one of the most challenging cases?GAE's F1-score of 0.99 exceeds the best baseline (RNN at 0.92) by 0.07 and the weakest (LSTM and Bi-LSTM at 0.71) by 0.28.

\begin{table}[h!]
	\small
	\centering
	\caption{Statistical Significance Test Results Comparing the proposed model with Baseline Models (Paired t-Test, $\alpha = 0.05$)}
	\begin{tabular}{|p{1.5cm}|c|c|c|p{1.8cm}|}
		\hline
		\textbf{Model} & \textbf{Mean Diff.} & \textbf{t-Statistic} & \textbf{p-Value} & \textbf{Significant?} \\ \hline
		CNN & 0.055 & 2.356 & 0.021 & Yes \\ \hline
		LSTM & 0.055 & 2.006 & 0.038 & Yes \\ \hline
		RNN & 0.027 & 2.547 & 0.016 & Yes \\ \hline
		GRU & 0.038 & 2.201 & 0.028 & Yes \\ \hline
		Bi-LSTM & 0.052 & 1.819 & 0.051 & No \\ \hline
	\end{tabular}
	\label{tab:stat_significance}
\end{table}

Statistical testing using paired t-tests (Table \ref{tab:stat_significance}) confirms that the proposed model significantly outperforms CNN, LSTM, RNN, and GRU at the 0.05 significance level. While the p-value for Bi-LSTM (0.051) slightly exceeds the threshold, it becomes significant under a relaxed level of $\alpha = 0.10$.

We further validate these results using a non-parametric Wilcoxon signed-rank test, which confirms a statistically significant difference between the proposed model and Bi-LSTM ($W = 33, p < 0.05$). Combined with the proposed model's consistent and class-balanced performance, these findings underscore its effectiveness and reliability for real-world bearing fault diagnosis.

\section{Discussion}

The proposed graph-based framework with a Graph Autoencoder (GAE) demonstrates significant advancements in fault classification across datasets with varying operating conditions. By transforming time series vibration data into graph representations, the proposed model captures both local and global temporal patterns-addressing limitations of conventional methods that treat time series as independent sequences. 

Shannon entropy is used to determine the optimal window size for segmentation to ensure informative node representations. This approach results in meaningful segments, particularly in the 3Hp dataset, which exhibits the most diverse fault dynamics. The use of Dynamic Time Warping (DTW) as a similarity metric further strengthens the graph structure by accurately modeling temporal dependencies between segments. These enhancements enable the proposed model to learn robust and discriminative representations for fault classification.

Cross-dataset evaluation reveals performance trends aligned with dataset characteristics. The model trained on the 3Hp dataset achieves the highest mean F1-score of 0.995 with minimal variability (\(\pm 0.005\)), significantly outperforming models trained on 1Hp (0.940, \(\pm 0.042\)) and 2Hp (0.950, \(\pm 0.038\)) datasets. Paired t-tests confirm these differences as statistically significant (\(p = 0.008\) and \(p = 0.005\), respectively). The superior performance is attributed to the 3Hp dataset's amplified vibration signals and broader fault dynamics, offering a richer feature space for training. 

Figure~\ref{fig:dataset_performance_relation} further supports this finding. It shows that larger statistical differences between datasets (e.g., 1Hp vs. 3Hp, \(t = 6.3482\)) correspond to larger F1-score gaps (0.055), highlighting the influence of dataset diversity on generalization. Conversely, the smaller difference between 1Hp and 2Hp (\(t = 2.8976\), F1-score difference = 0.010) suggests shared patterns that support moderate generalization.

The proposed model's encoder architecture, a Graph Attention Network (GAT) and TransformerConv, successfully captures hierarchical relationships within the graph. This design contributes to the consistent cross-dataset performance observed, especially for the 3Hp-trained model. An ensemble classifier is employed for final prediction, further improving robustness by integrating multiple decision boundaries and mitigating overfitting risks.

However, challenges remain in fault classes with subtle patterns, such as class 7, where lower F1-scores are observed (e.g., 0.85?0.89 for the 1Hp-trained model). Enhancing feature extraction for such cases could be a focus for future work, possibly through domain-specific features or advanced attention mechanisms.

One limitation of this study is the relatively small sample size for statistical testing (\(n = 10\) classes), which may affect the power of significance tests. While the Wilcoxon signed-rank test confirms the superiority of the proposed model over Bi-LSTM (\(W = 33, p < 0.05\)), the paired t-test yields a marginal result (\(p = 0.051\)). A larger set of classes or additional fault types could provide stronger statistical validation.

Additionally, the baseline models were evaluated on a single run, whereas the proposed model's F1-scores are averaged over three runs?potentially giving it an advantage in terms of robustness. Future comparisons should include multiple runs for all models to ensure fairness. Moreover, although the CWRU dataset is widely used, it may not reflect the full complexity of real-world industrial environments, where noise, operational variability, and novel fault types are prevalent. Evaluation on more diverse datasets, such as \cite{saeed2025deep}, is recommended to assess real-world applicability.

Finally, the ability of the proposed framework to visualize the relationship between dataset characteristics and model performance offers practical value. For instance, datasets with higher entropy and diverse fault modes, like 3Hp, are more suitable for training models intended for cross-domain applications. This insight is critical in industrial settings where selecting the most representative training data directly impacts the reliability of fault diagnosis systems under varying conditions. Nonetheless, the computational complexity of the model and the sensitivity to hyperparameters?such as the DTW threshold \( \theta \) and regularization parameter \( \lambda \)-highlight important directions for future optimization.

\section{Conclusion}

This study introduced a graph-based framework with a Graph Auto Encoder (GAE) for fault classification across vibration datasets from machinery operating at different horsepower levels. By employing entropy-based segmentation and DTW-based edge construction, we transformed time series data into a graph representation that captures both local and global temporal patterns. The GAE, with its deep graph transformer encoder, decoder, and ensemble classifier, effectively learned a latent representation of the graph, achieving robust cross-dataset generalization. Our evaluation on three datasets demonstrated that the model trained on the most diverse dataset (3Hp) achieved a mean F1-score of 0.995, significantly outperforming models trained on less diverse datasets (1Hp: 0.940, 2Hp: 0.950). The visualization of model performance relative to dataset characteristics provided insights into the impact of vibration pattern complexity and fault dynamics on generalization, highlighting the importance of dataset diversity in training robust models.

The proposed framework addresses the critical challenge of fault classification under varying operating conditions, offering a scalable solution for industrial applications. Future work could explore the integration of domain-specific features to improve performance on subtle fault classes, optimize the computational efficiency of the GAE, and extend the framework to other types of time series data, such as acoustic or thermal signals. Ultimately, this study contributes to the advancement of fault diagnosis by demonstrating the potential of graph-based methods to achieve accurate and generalizable fault classification in complex industrial environments.

\subsection*{Conflict of interest/Competing interests:} The authors declare no conflict of interest or competing interests.
	
\subsection*{Ethics approval and consent to participate:} Not applicable.
	
\subsection*{Consent for publication:} All authors have given their consent for publication.
	
\subsection*{Data availability:} The datasets used and/or analyzed during the current study are available from the corresponding author upon reasonable request.
	
\subsection*{Materials availability:} Not applicable.
	
	\subsection*{Code availability:} The code used in this study is available from the corresponding author upon reasonable request.

%
%
%
%

\bibliographystyle{sn-mathphys-num}
\bibliography{sn-bibliography}



\end{document}